\newtheorem{theorem}{Theorem}
\newtheorem*{theorem*}{Theorem}
\newtheorem{lemma}{Lemma}[theorem]
\newtheorem{definition}{Definition}
\begin{document}

% If your paper is accepted and the title of your paper is very long,
% the style will print as headings an error message. Use the following
% command to supply a shorter title of your paper so that it can be
% used as headings.
%
%\runningtitle{I use this title instead because the last one was very long}

% If your paper is accepted and the number of authors is large, the
% style will print as headings an error message. Use the following
% command to supply a shorter version of the authors names so that
% they can be used as headings (for example, use only the surnames)
%
%\runningauthor{Surname 1, Surname 2, Surname 3, ...., Surname n}

\twocolumn[
\aistatstitle{Multiplicative Gaussian Particle Filter}
\aistatsauthor{ Xuan Su \And Wee Sun Lee \And  Zhen Zhang }
\aistatsaddress{ National University of Singapore \And  National University of Singapore  \And University of Adelaide  }
]

\begin{abstract}
We propose a new sampling-based approach for approximate inference in filtering problems. Instead of approximating conditional distributions with a finite set of states, as done in particle filters, our approach approximates the distribution with a weighted sum of functions from a set of continuous functions. Central to the approach is the use of sampling to approximate multiplications in the Bayes filter. We provide theoretical analysis, giving conditions for sampling to give good approximation. We next specialize to the case of weighted sums of Gaussians, and show how properties of Gaussians enable closed-form transition and efficient multiplication. Lastly, we conduct preliminary experiments on a robot localization problem and compare performance with the particle filter, to demonstrate the potential of the proposed method.
\end{abstract}

\section{Introduction}

Sequential state estimation is a general class of problems arising frequently in areas such as  computer vision and robotics. Given a sequence of potentially noisy, incomplete data about a dynamic system, the task is to establish the value of a quantity of interest about that system. A number of filtering methods built on top of the Bayes filter such as the particle filter have been successfully applied to tackle such problems.

In the particle filter, a set of weighted states (particles) from the state space is used to approximate the conditional state distribution, also known as the \emph{belief}. At each time step, each particle is propagated according to the system dynamics and then compared against current observation inputs to determine a new weight. This is followed by an optional resampling step that can remove relatively unimportant particles and focus on important parts of the belief. For a continuous state space, a large set of particles may be required in order to give a good approximation of a belief distribution. 

In this paper, we seek to improve the performance of particle filters in continuous space by approximating the belief with a weighted sum of continuous functions, instead of a weighted set of particles. The Bayes filter requires two fundamental operations, marginalization and multiplication. We consider continuous function classes that are closed under marginalization and multiplication, and further assume that these operations are computationally efficient. Examples of function classes with these properties include monomials and Gaussians. While these functions classes, which we call the base function classes, have limited representation power, weighted sums of functions from these classes are universal approximators. 

We assume that the transition and observation functions of the filter are represented using finite weighted sums of the base functions. With the use of finite weighted sums, marginalization continues to be efficiently computable. We show how multiplications of two weighted sums could be approximated efficiently using sampling, and give the convergence rate of approximating the Bayes filter when the base function class has a finite complexity measure, called the co-VC-dimension.  We then focus on the special case of Gaussian functions as the base functions. We show that the class of Gaussian functions has linear co-VC-dimension. In addition, Gaussians allow a closed-form transition update and efficient multiplication, making it appropriate for our proposed technique. Finally, we test our new inference method on a robot localization problem to demonstrate its potential, comparing its performance to that of the particle filter.

\section{Background}
In a filtering problem, we have a sequence of observations $\mathbf{z}_1,\ldots,\mathbf{z}_t$ and would like to estimate the state $\mathbf{x}_t$. We assume that the observation $\mathbf{z}_t$ depends only on the state $\mathbf{x}_t$ through the observation function $p(\mathbf{z}_t\mid\mathbf{x}_t)$ and that the system is Markovian with transition function $p(\mathbf{x}_t \mid \mathbf{x}_{t-1}, \mathbf{u}_{t-1} \ldots, \mathbf{x}_1, \mathbf{u}_1) = p(\mathbf{x}_t \mid \mathbf{x}_{t-1}, \mathbf{u}_{t-1})$ where $\mathbf{u}_t$ is the control input at time $t$.

Solutions proposed to filtering problems are often based on the classic \textit{Bayes filter} (see e.g. \citet{thrun2005probabilistic}), described in Algorithm \ref{algo:bayes}. The Bayes filter is recursive, where the belief $bel_t(\mathbf{x})$ at time $t$ is computed from the belief $bel_{t-1}(\mathbf{x})$ at time $t - 1$. It operates on three main components -- belief $bel(\mathbf{x})$, transition function $p(\mathbf{x} \mid \mathbf{u}, \mathbf{x'})$ and observation function $p(\mathbf{z} \mid \mathbf{x})$, and consists of two essential steps:  prediction (line 3), and correction (line 4). The prediction step calculates probabilities of states using prior belief and control data; whereas the correction step considers the actual observations to re-normalize the belief.

\begin{algorithm}
  \caption{The Bayes Filter}
  \begin{algorithmic}[1]
    \Procedure{BayesFilter}{$bel_{t-1}(\mathbf{x}), \mathbf{u}, \mathbf{z}$}
      \For{all \texttt{$\mathbf{x}$}}
        \State $\bar{bel}_{t}(\mathbf{x}) \gets \int p(\mathbf{x} \mid \mathbf{u}, \mathbf{x'})  \hspace{0.05cm} bel_{t-1}(\mathbf{x'}) d\mathbf{x'}$
        \State $bel_{t}(\mathbf{x}) \gets \eta \hspace{0.05cm} p(\mathbf{z} \mid \mathbf{x}) \hspace{0.05cm} \bar{bel}_{t}(\mathbf{x})$
      \EndFor
      \State \textbf{return} $bel_{t}(\mathbf{x})$
    \EndProcedure
  \end{algorithmic}
  \label{algo:bayes}
\end{algorithm}

\subsection{Related Works}
A variety of implementations of the Bayes filter have been proposed in the past decades. The earliest work -- the celebrated Kalman filter (KF) \citep{kalman1960new} uses a single multivariate normal to represent the belief $bel_{t}(\mathbf{x})$. Both transition and observation are assumed to be linear, which guarantees that the algorithm always operates on Gaussians. However, assumptions in the KF are too restrictive as most dynamic systems in practice are non-linear. The extended Kalman filter (EKF) due to \cite{julier1997new} is an enhancement of the KF that adapts the KF to account for non-linearity in real-world systems, under the same single-Gaussian assumption.

Many densities cannot be accurately described using one Gaussian only, making Kalman-type filters invalid in many cases. To improve the representation power, mixture of Gaussians is often used. An early work, the Gaussian sum filter (GSF) \citep{alspach1972nonlinear}, approximates beliefs using a Gaussian Mixture Model (GMM) and includes an EKF estimator for each GMM component, endowing it with the ability to model non-Gaussian densities.

In contrast to aforementioned filters which adopt a particular functional form, \emph{nonparameteric filters} make no such assumptions. Instead, the posteriors are approximated using a set of values sampled from the state space.
The histogram filter decomposes the state space into a finite number of regions or ``histograms'', and represents each using a probability value.
Another prominent example of nonparametric filters, the {particle filter} (PF) \citep{gordon1993novel}, relies on a weighted set of states drawn from $bel_t(\mathbf{x})$ for its representation. Propagated through time, the particles often encounter a depletion problem where some particles are assigned degenerate weights. Resampling is used to deal with such problems.

Many recent attempts incorporate both Gaussians and particles to parameterize the Bayes filter, often using alternative resampling methods. For instance, \cite{faubel2009split} develops a procedure based on the GSF, that modifies the re-appoximation step to ``split'' Gaussians in important regions of distribution and ``merge'' them in unlikely ones, controling its granularity while representing beliefs. \cite{psiaki2015gaussian} invents another resampling algorithm for Gaussian sums. Central to the approach are the ideas of upper-bounding Gaussian covariance values rendering them tenable, and controlling the number of terms in the sum using ad-hoc strategies.  Another variant of resampling is proposed in particle Gaussian mixture filter \citep{raihan2016particle}, which performs GMM clustering at each step to include less significant particles, instead of doing resampling to remove them.

Our work follows the same spirit of using both Gaussians and particles. However, we represent the transition and observation functions with weighted sums of Gaussians, and use sampling to approximate multiplications, yielding a new form of particle resampling. Our work builds on top of the work in  \cite{wrigley2017tensor}. Their work focuses on discrete problems, using sampling to do multiplications for approximate inference over a junction tree. We focus on the continuous case for filtering problems.

\section{Theoretical Analysis: General Case}
\label{sec:theory_general}
In this section, we seek to answer the question: under what conditions does our use of sampling to approximate multiplications work well for filtering problems? To this end, formalizing our inference method in mathematical language is necessary.

Let $\mathcal{F}$ be a class of real-valued functions over input space $\mathcal{X}$, closed under multiplication and marginalization. We approximate functions including the belief $bel_{t}(\mathbf{x})$, the transition $p(\mathbf{x} \mid \mathbf{u}, \mathbf{x'})$ and the observation $p(\mathbf{z} \mid \mathbf{x})$, all as weighted sums of functions $f \in \mathcal{F}$. As $\mathbf{u}$ and $\mathbf{z}$ represent control and observation inputs, the functions are over $\mathbf{x}$ and prior state $\mathbf{x'}$: $f(\mathbf{x}, \mathbf{x'}) = \sum_{i=1}^{K} w_i f_i(\mathbf{x}, \mathbf{x'})$.

In the Bayes filter, the two main operations are in lines 3 and 4. Integration is required in line 3 while multiplications of belief with transition and observation are required in lines 3 and 4 respectively. We assume that the integration can be done efficiently. Each multiplication with a weighted sum of $K$ functions gives an extra $K$ factor to the number of functions in the summation. Over $t$ steps, repeated multiplication results in an exponential number of functions. To bring the number of functions under control, we perform sampling after each multiplication, following the method proposed by \citet{wrigley2017tensor}. The key observation is that, after normalization so that the weights sum to one, a weighted sum of functions can be regarded as a probability distribution over the functions with expectation equal to the true value, by considering the weight of each term as its probability.

To multiply two weighted sums of functions, $\sum_{i=1}^{K_1} w_{i} f_{i}(\mathbf{x})$ and $\sum_{j=1}^{K_2} v_{j} h_{j}(\mathbf{x})$, we first perform a full multiplication to obtain the product containing $K_1 K_2$ functions: $\sum_{i=1}^{K_1} \sum_{j=1}^{K_2} w_{i} v_{j} f_{i} h_{j}$. As the terms $w_i v_j$ may not sum to one for arbitrary weighted sums, we normalize them for the purpose of probability sampling. Next, a sample of $K$ indices $\{ k_r, l_r \}_{r=1}^{K}$ is drawn from the distribution $w_i v_j$. This gives a new weighted sum of functions, $\sum_{r=1}^{K} \frac{1}{K} f_{k_{r}} h_{l_{r}}$, that is an unbiased estimate of the original multiplication.

We give the convergence rate of this sampling approximation when the update $bel_{t}(\mathbf{x}) = p(\mathbf{z} \mid \mathbf{x}) \int p(\mathbf{x} \mid \mathbf{u}, \mathbf{x'})  bel_{t-1}(\mathbf{x'}) d\mathbf{x'}$ is done $T$ times. The proof uses techniques from \citet{wrigley2017tensor}. We extend the techniques to handle continuous functions through the use of co-VC-dimension, whereas \citet{wrigley2017tensor} only analyzed discrete functions. We first introduce technical definitions.

\begin{definition}[Subgraph] The \textit{subgraph} of a function class $\mathcal{F}$ is defined as the class of sets of the form $\{ (\mathbf{x}, y) \in \mathcal{X} \times \mathbb{R}; y \leq f(\mathbf{x}) \}$ for $f \in \mathcal{F}$. 
\end{definition}
\begin{definition}[Dual]
The \textit{dual} $\mathcal{F'}$ of $\mathcal{F}$ is defined as the class $\{ ev_{\mathbf{x}}; \mathbf{x} \in \mathcal{X} \}$ of evaluation functions, where $ev_{\mathbf{x}}$ satisfies $ev_{\mathbf{x}}(f) = f(\mathbf{x})$ for all $f \in \mathcal{F}$. In words, an evaluation function for a fixed $\mathbf{x}$ takes a function $f \in \mathcal{F}$ as input, and outputs the value of $f(\mathbf{x})$.
\end{definition}
A number of concepts describing complexities of function classes are necessary for our theoretical analysis. These include \emph{VC-dimension} which describes the complexity of binary-valued functions through the notion of \emph{shattering}; \emph{pseudo-dimension} and \emph{co-VC-dimension} which quantify complexities of real-valued functions and the dual of a given function class, respectively.
\begin{definition}[Shattering] Let $\mathcal{G}$ be a class of indicator ($\{0, 1\}$-valued) functions over $\mathcal{X}$. We say $\mathcal{G}$ shatters a set $A \subseteq \mathcal{X}$ if for every subset $E \subseteq A$, there exists some function $g \in \mathcal{G}$ satisfying: 1) $g(\mathbf{x}) = 0$ for every $\mathbf{x} \in A \setminus E$; 2) $g(\mathbf{x}) = 1$ for every $x \in E$.
\end{definition}
\begin{definition}[{VC-dimension}] The VC-dimension of a class of indicator functions $\mathcal{G}$ is the cardinality of the largest set $\mathcal{S} \subseteq \mathcal{X}$ that is shattered by $\mathcal{G}$.
\end{definition}
\begin{definition}[{Pseudo-dimension}]
The pseudo-dimension of a class of real-valued functions $\mathcal{F}$ is the VC-dimension of the subgraph of $\mathcal{F}$. 
\end{definition}
\begin{definition}[{co-VC-dimension}]
The co-VC-dimension of $\mathcal{F}$ is the pseudo-dimension of the dual $\mathcal{F'}$ of $\mathcal{F}$.
\end{definition}
We next present Theorem \ref{thm:1} to aid in the convergence rate derivation in Theorem \ref{thm:2}. Proof of Theorem \ref{thm:1} is given in supplementary materials.
\begin{theorem}
\label{thm:1}
Let $\mathcal{F}$ be a class of real-valued, continuous functions over a set $\mathcal{X}$, with a finite co-VC-dimension $D$. Let $g(\mathbf{x})$ be a function in the convex hull of $\mathcal{F}$: $g(\mathbf{x}) = \sum_{i=1}^{N} w_i f_{i}(\mathbf{x})$, with $\sum_{i=1}^{N} w_i = 1$ and $f_i \in \mathcal{F}$. Assume that functions $f_{i}(\mathbf{x})$ are upper-bounded by $M$ and that the quantity $\int f_{i}(\mathbf{x}) \hspace{0.05cm} d\mathbf{x}$ is lower-bounded by $B$ for all $f_{i}$. Let $P$ be the probability measure over functions $\{ f_1, \dots, f_{N} \}$ such that $P(f_{i}) = w_i$. A sampling operation is taken to draw $K$ functions $\{ h_1, \dots, h_{K} \}$ independently from $P$. Then, for any $\mathbf{x} \in \mathcal{X}$,
\begin{align}
\begin{split}
\label{eq:bound_all_x}
    P\left\{ \frac{1}{K} \sum_{i=1}^{K} h_{i}(\mathbf{x}) \not\in \left[ (1-\zeta) g(\mathbf{x}), (1+\zeta) g(\mathbf{x}) \right] \right\} \\ 
    < 8 (2K)^D \exp{\left( -\frac{\zeta^2}{4} \frac{B}{M} K \right) }
\end{split}
\end{align}
\end{theorem}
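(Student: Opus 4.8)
The plan is to read the claim as a \emph{uniform relative-deviation} bound: since the right-hand side does not depend on $\mathbf{x}$ yet carries the growth factor $(2K)^D$, the assertion is really that, with high probability, $\frac{1}{K}\sum_i h_i(\mathbf{x})$ lies in the multiplicative band around $g(\mathbf{x})$ simultaneously for every $\mathbf{x}$ (a single fixed $\mathbf{x}$ would need no co-VC-dimension at all, only a scalar Chernoff bound). The natural way to organize this is through the dual class $\mathcal{F}'$. Each draw $h_r$ is a point of $\mathcal{F}$, and for a fixed query $\mathbf{x}$ the quantity of interest is $\frac{1}{K}\sum_r ev_{\mathbf{x}}(h_r)$, the empirical average of the evaluation functional $ev_{\mathbf{x}}\in\mathcal{F}'$ over the i.i.d. sample $h_1,\dots,h_K\sim P$; its expectation is exactly $\sum_i w_i f_i(\mathbf{x}) = g(\mathbf{x})$. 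Thus the bad event is an empirical-versus-true-mean deviation that must be controlled uniformly over $\mathcal{F}'$, whose pseudo-dimension is the co-VC-dimension $D$ by definition.

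First I would establish the pointwise estimate. For a fixed $\mathbf{x}$ the values $ev_{\mathbf{x}}(h_r)=h_r(\mathbf{x})$ are i.i.d., nonnegative, bounded above by $M$, with mean $g(\mathbf{x})$, so a multiplicative Chernoff bound gives $P\{\frac{1}{K}\sum_r h_r(\mathbf{x})\notin[(1-\zeta)g(\mathbf{x}),(1+\zeta)g(\mathbf{x})]\}\le 2\exp(-c\,\zeta^2\,g(\mathbf{x})\,K/M)$ for an absolute constant $c$: the bound $h_r(\mathbf{x})\le M$ supplies the scale in the denominator and $g(\mathbf{x})$ the mean in the numerator. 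To convert the mean-dependent exponent into the stated uniform exponent $\frac{\zeta^2}{4}\frac{B}{M}K$, I would use the hypothesis $\int f_i\,d\mathbf{x}\ge B$ to lower-bound the relevant mean by $B$, forcing it away from $0$ so that the \emph{relative} (as opposed to additive) error stays controlled even where $g$ is small.

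The core of the argument is the passage from this single-$\mathbf{x}$ estimate to a statement over all $\mathbf{x}$, for which I would follow the Vapnik-style relative-deviation route adapted from \citet{wrigley2017tensor}: (i) a \emph{symmetrization} step introducing an independent ghost sample $h_1',\dots,h_K'\sim P$ and bounding the probability of the $\sup_{\mathbf{x}}$ event by twice the probability of a deviation between the two empirical averages; (ii) a \emph{randomization} step conditioning on the pooled $2K$ functions and inserting random signs or a random permutation; and (iii) a \emph{union bound} over the effective number of distinct evaluation patterns the functionals $\{ev_{\mathbf{x}}\}$ can produce on the pooled sample. This last count is exactly what the pseudo-dimension of $\mathcal{F}'$ governs: applying the Sauer--Shelah lemma to the subgraph of $\mathcal{F}'$ bounds the number of distinguishable behaviors on $2K$ points by $(2K)^D$, the prefactor in the theorem, while the factor $8$ absorbs the two-sided event together with the symmetrization loss.

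The step I expect to be the main obstacle is (iii) combined with the multiplicative normalization: the VC symmetrization machinery is cleanest for additive deviations of indicator or $[0,1]$-valued functions, so I must carry the real-valued, $M$-bounded evaluations and the multiplicative band through symmetrization without destroying the relative structure. Concretely, the delicate points are reducing the supremum over a continuum of $\mathbf{x}$ to a finite count controlled by the pseudo-dimension of the dual (so that continuity of $\mathcal{F}$ and finiteness of $D$ make the sup well-behaved and Sauer--Shelah applicable), and checking that replacing $g(\mathbf{x})$ by its lower bound $B$ survives symmetrization, i.e. that the ghost-sample comparison still yields the exponent $\frac{\zeta^2}{4}\frac{B}{M}K$ rather than a weaker mean-dependent rate.
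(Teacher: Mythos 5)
Your proposal follows essentially the same route as the paper: the paper also casts the problem in the dual class, invoking Vapnik-style relative-deviation bounds (symmetrization with a ghost sample, a permutation/randomization argument, and a growth-function union bound $\Pi(2K)\le(2K)^{D}$ coming from the pseudo-dimension of $\mathcal{F}'$, i.e.\ the co-VC-dimension), then combines the two one-sided inequalities to obtain the factor $8$ and sets $\epsilon=\zeta\sqrt{P(f)}$ with the lower bound $B$ to produce the exponent $\tfrac{\zeta^{2}}{4}\tfrac{B}{M}K$. The delicate points you flag (carrying the multiplicative normalization through symmetrization, and substituting $B$ for the mean) are exactly the ones the paper handles via its adaptation of Vapnik's Theorems 4.2, 5.2 and 5.3.
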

Theorem \ref{thm:1} provides a tool to bound values estimated for all $\mathbf{x} \in \mathcal{X}$ using $K$ sampled continuous functions. The theorem comes in handy when we study rate of the sampling operation as we use it to estimate all values in a filtering problem.
\begin{theorem}
\label{thm:2}
Let $\mathcal{F}$ be a class of real-valued, continuous functions closed under both multiplication and marginalization, over an input set $\mathcal{X}$. Assume that $\mathcal{F}$ has a finite co-VC-dimension 
$D$. Consider a filtering problem with $T$ time steps, where the beliefs, transition and observation functions are all represented as weighted sums of $K$ functions from $\mathcal{F}$. Assume that the values of all functions $h(\mathbf{x})$ estimated in the filtering problem are upper-bounded by $M$, and that the quantity $\int h(\mathbf{x}) \hspace{0.05cm} d\mathbf{x}$ is lower-bounded by $B$. With probability at least $1 - \delta$, for all $i$ and $\mathbf{x}$, 
\begin{align*}
    (1 - \epsilon) \hspace{0.05cm} bel_i(\mathbf{x}) \leq \Tilde{bel}_i(\mathbf{x}) \leq (1 + \epsilon) \hspace{0.05cm} bel_i(\mathbf{x})
\end{align*}
if the sample size $K_{min}(\epsilon, \delta)$ used for all multiplication operations is at least
\begin{align*}
\mathcal{O}\left(
\frac{T^2}{\epsilon^2} \frac{M}{B} \left( D + D\ln{\frac{T}{\epsilon}} + D\ln{\left( \frac{M}{B}D \right)} + \ln{\frac{8}{\delta}} \right)
\right)
\end{align*}
\end{theorem}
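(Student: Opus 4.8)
\section*{Proof proposal for Theorem \ref{thm:2}}

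The plan is to control the error of a single Bayes-filter update with Theorem~\ref{thm:1}, chain these bounds across the $T$ time steps by an induction that tracks \emph{relative} error, and finally solve for the sample size $K$. First I would count the sampling operations. Each update approximates exactly two multiplications by sampling -- the belief times the transition in the prediction step (line 3) and times the observation in the correction step (line 4) -- while the marginalization is exact. This gives $2T$ sampling operations. I would fix a per-operation relative tolerance $\zeta$ and, by a union bound, demand that each of the $2T$ invocations of Theorem~\ref{thm:1} fail with probability at most $\delta/(2T)$. Since each product of base functions again lies in $\mathcal{F}$ and inherits the upper bound $M$ and the integral lower bound $B$, Theorem~\ref{thm:1} applies verbatim to every multiplication, and its bound already holds uniformly over $\mathbf{x}$, so no additional union bound over $\mathbf{x}$ is needed.

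The core of the argument is the error-propagation induction, and this is where I expect the main obstacle. Writing $\rho_t(\mathbf{x}) = \Tilde{bel}_t(\mathbf{x})/bel_t(\mathbf{x})$, the naive bound that propagates independent pointwise upper and lower envelopes through the per-step normalization $\eta$ doubles the error exponent each step and blows up like $2^T$. The fix is to track the multiplicative \emph{spread} $S_t = \sup_{\mathbf{x}}\rho_t(\mathbf{x}) / \inf_{\mathbf{x}}\rho_t(\mathbf{x})$ rather than the absolute envelope. On the good event I would establish three facts: (i) the exact predict integral $\int p(\mathbf{x}\mid\mathbf{u},\mathbf{x}')\,\Tilde{bel}_{t-1}(\mathbf{x}')\,d\mathbf{x}'$, divided by the true predicted belief, is a weighted average of the values of $\rho_{t-1}$, hence lies within its range and leaves the spread unchanged; (ii) pointwise multiplication by the fixed observation function cancels in the ratio $\rho_t$, and the final normalization is a common-mode rescaling, so neither changes the spread; (iii) each sampling step multiplies the spread by at most $\tfrac{1+\zeta}{1-\zeta}$, directly from Theorem~\ref{thm:1}. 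Composing the two samples per step over $T$ steps yields $S_T \le \bigl(\tfrac{1+\zeta}{1-\zeta}\bigr)^{2T}$, which grows only linearly in the exponent.

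Next I would convert the spread into the stated two-sided bound. Because $\Tilde{bel}_T$ and $bel_T$ both integrate to one, the $bel_T$-weighted average of $\rho_T$ equals one, forcing $\inf_{\mathbf{x}}\rho_T \le 1 \le \sup_{\mathbf{x}}\rho_T$; combined with $S_T \le 1+\epsilon$ this gives $\rho_T(\mathbf{x}) \in [1-\epsilon, 1+\epsilon]$ for every $\mathbf{x}$. Imposing $\bigl(\tfrac{1+\zeta}{1-\zeta}\bigr)^{2T} \le 1+\epsilon$ and using $\ln\tfrac{1+\zeta}{1-\zeta} \approx 2\zeta$ gives $\zeta = \Theta(\epsilon/T)$, which is the source of the $T^2/\epsilon^2$ factor in the final bound.

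Finally I would solve for $K$. Substituting $\zeta = \Theta(\epsilon/T)$ and the failure budget $\delta/(2T)$ into Theorem~\ref{thm:1} and taking logarithms leaves an inequality of the form $aK \ge D\ln(2K) + \ln\tfrac{16T}{\delta}$ with $a = \Theta\bigl(\tfrac{\epsilon^2 B}{T^2 M}\bigr)$. Solving this transcendental inequality by the standard estimate $K = \mathcal{O}\bigl(\tfrac1a(b + D\ln\tfrac{D}{a})\bigr)$ with $b = \ln\tfrac{16T}{\delta}$, and expanding $D\ln\tfrac{D}{a} = D\ln D + 2D\ln\tfrac{T}{\epsilon} + D\ln\tfrac{M}{B} + \Theta(D)$ -- noting that $\ln T$ is absorbed by $D\ln\tfrac{T}{\epsilon}$ and all multiplicative constants by the $\mathcal{O}(\cdot)$ -- recovers exactly the claimed expression for $K_{min}(\epsilon,\delta)$.
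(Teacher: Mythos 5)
Your proposal is correct and follows essentially the same route as the paper's proof: bound each of the $2T$ sampled multiplications via Theorem~\ref{thm:1}, propagate the relative error multiplicatively to get a $(1\pm\zeta)^{2T}$ envelope, set $\zeta=\Theta(\epsilon/T)$, and solve the resulting transcendental inequality for $K$ (the paper uses Lemma~A.2 of Shalev-Shwartz and Ben-David where you invoke the standard estimate). Your two refinements---the explicit $\delta/(2T)$ union bound over the multiplication steps and the spread argument that neutralizes per-step normalization---are slightly more careful than the paper, which works with unnormalized beliefs and allots $\delta$ per multiplication, but they change nothing beyond constants absorbed by the $\mathcal{O}(\cdot)$.
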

\begin{proof}
To derive the rate of convergence, observe that all inference errors are due to approximate multiplication of two weighted sums of functions. At each step, an infinite number of values over the continuous space are estimated by our algorithm. Inequality (\ref{eq:bound_all_x}) bounds function values over all $\mathbf{x} \in \mathcal{X}$ using co-VC-dimension.

Specifically, consider two normalized weighted sums of functions containing $K_1$ and $K_2$ component functions from $\mathcal{F}$: $g_1(\mathbf{x}) = \sum_{i=1}^{K_1} w_{i} f_{i}(\mathbf{x})$ and $g_2(\mathbf{x}) = \sum_{j=1}^{K_2} v_{j} h_{j}(\mathbf{x})$, with $\sum_{i=1}^{K_1} w_{i} = \sum_{j=1}^{K_2} v_{j} = 1$ and each $f_i, h_j \in \mathcal{F}$. By closure of multiplication, the product $g(\mathbf{x}) = g_1(\mathbf{x}) g_2(\mathbf{x}) = \sum_{i=1}^{K_1} \sum_{j=1}^{K_2} w_{i} v_{j} f_{i}(\mathbf{x}) h_{j}(\mathbf{x})$ is in the convex hull of $\mathcal{F}$. The sampling operation then draws $K$ indices $\{ (k_r, l_r) \}^{K}_{r=1}$ from the distribution $w_i v_j$, forming a new weighted sum $\Tilde{g}(\mathbf{x}) = \frac{1}{K} \sum_{r=1}^{K} f_{k_r}(\mathbf{x}) h_{l_r}(\mathbf{x})$.
We can use Theorem \ref{thm:1} to bound the probability that the estimate of any $x \in X$ is outside $[(1-\zeta) g(\mathbf{x}), (1+\zeta) g(\mathbf{x})]$, giving
\begin{align}
\begin{split}
    P\left\{ \Tilde{g}(\mathbf{x}) \not\in [(1-\zeta) g(\mathbf{x}), (1+\zeta) g(\mathbf{x})] \text{ for any } x \right\} \\
    < 8 (2K)^D \exp{\left( -\frac{\zeta^2}{4} \frac{B}{M} K \right) }
\end{split}
\end{align}
where $B$ is a lower bound of $\int g(\mathbf{x}) \hspace{0.05cm} d\mathbf{x}$ and $M$ is an upper bound of $g(\mathbf{x})$. If we set an upper-bound on this error probability of $\delta$ and rearrange for $K$, we have that with probability at least $\delta$, all estimates are within a factor of $(1 \pm \delta)$ of their true values when
\begin{align}
\begin{split}
\label{eq:K_bound_1}
\hspace{-0.4cm}
    K &\geq \frac{4}{\zeta^2} \frac{M}{B} \ln{\frac{8(2K)^D}{\delta}} \\
    &= \frac{4}{\zeta^2} \frac{M}{B} \left( D \ln{K} + \left( \ln{\frac{8}{\delta}} + D \ln{2} \right) \right)
\end{split}
\end{align}
Lemma A.2 in \citet{shalev2014understanding} states that if $a \geq 1$, $b > 0$, then $x \geq 4 a \ln{(2a)} + 2b \Rightarrow x \geq a \ln{(x)} + b$. Setting $a = \frac{4}{\zeta^2} \frac{M}{B} D$ and $b = \frac{4}{\zeta^2} \frac{M}{B} (\ln{\frac{8}{\delta}} + D \ln{2})$ gives a relaxation over (\ref{eq:K_bound_1}):
\begin{align}
\begin{split}
\label{eq:K_bound_2}
    K &\geq 4a\ln{(2a)} + 2b \\
    &= \frac{8}{\zeta^2} \frac{M}{B}
    \left( 2D \ln{\left( \frac{8}{\zeta^2} \frac{M}{B} D \right)} + \ln{\frac{8}{\delta}} + D \ln{2} \right)
\end{split}
\end{align}
Next, consider the Bayes filter update over $T$ steps: 
\begin{equation}
bel_{t}(\mathbf{x}) = p(\mathbf{z} \mid \mathbf{x}) \int p(\mathbf{x} \mid \mathbf{u}, \mathbf{x'}) \hspace{0.05cm} bel_{t-1}(\mathbf{x'}) \hspace{0.05cm} d\mathbf{x'}
\end{equation}
There are two separate multiplications of weighted sums during a single time step, one inside and the other outside the integration. In total, there are $2T$ multiplications across the whole chain. We seek an expression for the sample size $K$ required for belief estimates across all time steps to have small errors.

At worst, each multiplication of weighted sums results in an extra $(1 \pm \zeta)$ factor in the bound. As we have $2T$ multiplications in total, the final belief estimates across the chain are all within a factor $(1 \pm \zeta)^{2T}$ of the true values. To bound the estimates so that all are within a factor $(1 \pm \epsilon)$ of their true values for a given $\epsilon > 0$, we note that choosing $\zeta = \frac{\ln{(1 + \epsilon)}}{2T}$ implies $(1 - \zeta)^{2T} \geq 1 - \epsilon$ and $(1 + \zeta)^{2T} \leq 1 + \epsilon$, by (9) and (10) in \citet{wrigley2017tensor}. Substituting this $\zeta$ into (\ref{eq:K_bound_2}), we have that with probability at least $1 - \delta$, all belief estimates are accurate within factor $(1 \pm \epsilon)$, when
\begin{equation*}
\begin{split}
\label{eq:K_bound_3}
    K \geq &\frac{32 T^2}{(\ln(1 + \epsilon))^2} \frac{M}{B} \cdot
    \\&
    \left( 2D \ln{\left(\frac{32 T^2}
    {(\ln{(1 + \epsilon)})^2}
    \frac{M}{B} D \right)}
    +  \ln{\frac{8}{\delta}} + D \ln{2} \right)
\end{split}
\end{equation*}
Using the facts that $\ln{(1 + \epsilon)} \geq \epsilon \cdot \ln{2}$ for $0 \leq \epsilon \leq 1$, $\frac{32}{(\ln{2})^2} < 67$ and setting $C = 2\ln{67} + \ln{2}$, we can relax this bound to
\begin{equation}
\label{eq:K_bound_final}
    K \geq \frac{67 T^2}{\epsilon^2} \frac{M}{B} \left( CD + 4D\ln{\frac{T}{\epsilon}} + 2D\ln{\left( \frac{M}{B}D \right)} + \ln{\frac{8}{\delta}} \right)
\end{equation}
\end{proof}
\section{Theoretical Analysis: Gaussians}
\label{sec:theory_gaussian}
We now study the use of Gaussian functions for our approximate inference. We denote a Gaussian function over $\mathbf{x}$ with mean $\boldsymbol{\mu}$ and variance $\mathbf{\Sigma}$ as $\mathcal{N}(\mathbf{x}; \boldsymbol{\mu}, \mathbf{\Sigma})$; and the exponential component as $\exp{(\mathbf{x}; \boldsymbol{\mu}, \mathbf{\Sigma})} = \exp \left( -\frac{1}{2} (\mathbf{x} - \boldsymbol{\mu})^{T} \mathbf{\Sigma}^{-1} (\mathbf{x} - \boldsymbol{\mu}) \right)$. Recall that $d$-dimensional Gaussian functions are given by
\begin{align*}
    \mathcal{N}(\mathbf{x}; \boldsymbol{\mu}, \mathbf{\Sigma}) &= \frac{1}{(2\pi)^{d/2} \sqrt{\det{\mathbf{\Sigma}}}} \exp{(\mathbf{x}; \boldsymbol{\mu}, \mathbf{\Sigma})}
\end{align*}
Specific properties of Gaussians make them suitable for inference in filtering problems when functions are decomposed as weighted sums of Gaussians.
% : $f(\mathbf{x}) = \sum_{i=1}^{K} w_i \hspace{0.05cm} \mathcal{N}(\mathbf{x}; \boldsymbol{\mu}, \mathbf{\Sigma})$.

\subsection{Gaussian Multiplication}
Multiplication of two Gaussian functions, $\mathcal{N}(\mathbf{x}; {\boldsymbol{\mu}_{1}}, {\mathbf{\Sigma}_{1}})$ and $\mathcal{N}(\mathbf{x}; {\boldsymbol{\mu}_{2}}, {\mathbf{\Sigma}_{2}})$, results in a third Gaussian $\mathcal{N}(\mathbf{x}; {\boldsymbol{\mu}_{3}}, {\mathbf{\Sigma}_{3}})$ with a constant factor $c$ \citep{petersen2008matrix}:
\begin{align}
\label{eq:gaussian_multiplication}
\mathcal{N}(\mathbf{x}; {\boldsymbol{\mu}_{1}}, {\mathbf{\Sigma}_{1}}) \cdot \mathcal{N}(\mathbf{x}; {\boldsymbol{\mu}_{2}}, {\mathbf{\Sigma}_{2}}) &= c \hspace{0.1cm} \mathcal{N}(\mathbf{x}; {\boldsymbol{\mu}_{3}}, {\mathbf{\Sigma}_{3}})
\end{align}
\begin{align*}
\begin{split}
    c &= (2\pi)^{-d/2} \left( \det{({\mathbf{\Sigma}_1} + {\mathbf{\Sigma}_2})} \right)^{-1/2} \cdot \\
    &\quad \exp \left( -\frac{1}{2}({\boldsymbol{\mu}_1}- {\boldsymbol{\mu}_2})^{T}({\mathbf{\Sigma}_1} + {\mathbf{\Sigma}_2})^{-1} ({\boldsymbol{\mu}_1}- {\boldsymbol{\mu}_2}) \right) \\
    {\mathbf{\Sigma}_{3}} &= \mathbf{\Sigma}_1 ({\mathbf{\Sigma}_{1}} + {\mathbf{\Sigma}_{2}})^{-1} \mathbf{\Sigma}_2\\
    {\boldsymbol{\mu}_{3}} &= \mathbf{\Sigma}_2 (\mathbf{\Sigma}_1 + \mathbf{\Sigma}_2)^{-1} \boldsymbol{\mu}_1 + \mathbf{\Sigma}_1 (\mathbf{\Sigma}_1 + \mathbf{\Sigma}_2)^{-1} \boldsymbol{\mu}_2
\end{split}
\end{align*}
While multiplying two weighted sums of Gaussians, we use the max-norm reweighting scheme in \citet{wrigley2017tensor} to make multiplications more effective. Under the scheme, in effect, we perform multiplications over $\exp \left( -\frac{1}{2} (\mathbf{x} - \boldsymbol{\mu})^{T} \mathbf{\Sigma}^{-1} (\mathbf{x} - \boldsymbol{\mu}) \right)$. Details are given in supplementary materials.

\subsection{Closed-Form Transition Update}
\label{sec:general_transition}
Special properties of Gaussians enable a closed-form transition update with an exact integration. Specifically, consider Gaussian functions over $\mathbf{x}$ and $\mathbf{x'}$: $\mathcal{N}(\mathbf{x}, \mathbf{x'}; \boldsymbol{\mu}, \mathbf{\Sigma})$. Assume that mean $\boldsymbol{\mu} \in \mathbb{R}^{2d}$ contains two parts: $\boldsymbol{\mu} = \begin{pmatrix}
    \boldsymbol{\mu}^{x} & \boldsymbol{\mu}^{x'}
\end{pmatrix}^{T}$ where $\boldsymbol{\mu}^{x}$ and $\boldsymbol{\mu}^{x'}$ are means over $\mathbf{x}$ and $\mathbf{x'}$ respectively; and that covariance 
$\mathbf{\Sigma} \in \mathbb{R}^{(2d)^2} = 
  \begin{pmatrix}
    \mathbf{\Sigma}^{xx} & \mathbf{\Sigma}^{xx'}\\
    \mathbf{\Sigma}^{xx'} & \mathbf{\Sigma}^{x'x'}
  \end{pmatrix}$.
Denote belief and transition as weighted sums of Gaussians: $bel_{t-1}(\mathbf{x}) = \sum_{i=1}^{K_1} w_i \hspace{0.05cm} \mathcal{N}(\mathbf{x}; \mathbf{a}_{i}, \mathbf{A}_{i})$; and $p(\mathbf{x} \mid \mathbf{u}, \mathbf{x'}) = \sum_{j=1}^{K_2} v_j \hspace{0.05cm} \mathcal{N}(\mathbf{x}, \mathbf{x'}; \boldsymbol{b}_{j}, \mathbf{B}_{j})$. The transition update yields a new sum of Gaussians,
\begin{align*}
\hspace{-0.4cm}
\bar{bel}_{t}(\mathbf{x}) &= \sum_{i=1}^{K_1} \sum_{j=1}^{K_2} w_i v_j \int \mathcal{N}(\mathbf{x'}; \mathbf{a}_{i}, \mathbf{A}_{i}) \hspace{0.1cm} \mathcal{N}(\mathbf{x}, \mathbf{x'}; \boldsymbol{b}_{j}, \mathbf{B}_{j}) \hspace{0.1cm} d\mathbf{x'} \\
&= \sum_{i=1}^{K_1} \sum_{j=1}^{K_2} w_i v_j z_{ij} \hspace{0.1cm} \mathcal{N}(\mathbf{x}; \boldsymbol{c}_{ij}, \mathbf{C}_{ij})
\end{align*}
where\vspace{-0.4cm}
\begin{align*}
    \mathbf{C}_{ij} &= \left( \mathbf{A}_{i}^{-1} + (\mathbf{B}_{j}^{xx} )^{-1} \right)^{-1}\\
    \boldsymbol{c}_{ij} &= \mathbf{C}_{ij} \left( \mathbf{A}_{i}^{-1} \mathbf{a}_{i} +
    (\mathbf{B}_{j}^{xx})^{-1} \boldsymbol{b}^{x}_{j}
    \right)
\end{align*}
The expression for $z_{ij}$ is complex, and is given in supplementary materials along with full derivations. The transition is simpler for the special case of robot localization, as we shall see in Section \ref{sec:localization_transition}.

\subsection{co-VC-Dimension}
The appearance of co-VC-dimension $D$ in (\ref{eq:K_bound_final}) hints at the importance of a small $D$ to guarantee a relatively fast convergence. Under the max-norm reweighting scheme, we use Gaussian exponentials, $\exp \left( -\frac{1}{2} (\mathbf{x} - \boldsymbol{\mu})^{T} \mathbf{\Sigma}^{-1} (\mathbf{x} - \boldsymbol{\mu}) \right)$, in our multiplications. These exponentials are a good candidate for our proposed technique, as they have a smallest achievable co-VC-dimension that is linear in the dimension of state space. To compute related dimensions, we use a theorem from \cite{goldberg1995bounding}:
\begin{theorem}
\label{thm:goldberg}
Let $\{ \mathcal{S}_{k, n}: k, n \in \mathbb{N} \}$ be a class of sets, where each set in $\mathcal{S}_{k, n}$ is fully specified using $k$ real values while an instance in a set is represented by $n$ real values. Suppose that the membership test for any instance $a$ in any set $S \in \{ \mathcal{S}_{k, n} \}$ can be expressed using $s$ polynomial inequality or equality predicates over $k + n$ variables of degree at most $m$. Then,  $VCdim(\mathcal{S}_{k, n}) \leq 2k \log{(2ems)}$ where $e$ is Euler's number.
\end{theorem}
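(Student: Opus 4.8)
The plan is to bound the VC-dimension by a dual counting argument: fix a shattered set of instances and count, as the $k$ real parameters vary, how many membership patterns the sets in $\mathcal{S}_{k,n}$ can realize on those instances. Concretely, suppose $\mathcal{S}_{k,n}$ shatters a set $\{a_1, \ldots, a_d\} \subseteq \mathbb{R}^n$, so that $d = VCdim(\mathcal{S}_{k,n})$; the goal is to show $d \le 2k\log(2ems)$. Each set $S$ is specified by a parameter vector $\theta \in \mathbb{R}^k$, and by hypothesis the membership test ``$a_i \in S$'' is a fixed Boolean combination of the signs of $s$ polynomials $p_{i,1}(\theta), \ldots, p_{i,s}(\theta)$, each of degree at most $m$ in the $k$ coordinates of $\theta$ (here the $n$ instance coordinates of $a_i$ are frozen, turning the original $(k+n)$-variate predicates into $k$-variate polynomials in $\theta$). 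Collecting these over all $d$ instances yields a family of $q = ds$ polynomials in $\mathbb{R}[\theta_1, \ldots, \theta_k]$, each of degree at most $m$.

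The first key step is to observe that the membership pattern a given $\theta$ induces on $\{a_1, \ldots, a_d\}$ is determined entirely by the sign vector $(\mathrm{sgn}\, p_{i,j}(\theta))_{i,j}$, where each sign lies in $\{-,0,+\}$. Hence the number of distinct dichotomies realizable by varying $\theta$ over $\mathbb{R}^k$ is at most the number of distinct sign vectors generated by these $q$ polynomials. Since shattering demands all $2^d$ dichotomies, I would obtain the inequality $2^d \le \Phi(q, m, k)$, where $\Phi(q,m,k)$ denotes the maximum number of sign vectors attainable by $q$ polynomials of degree at most $m$ in $k$ variables.

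The second, and central, ingredient is a bound on $\Phi(q,m,k)$ drawn from real algebraic geometry. For strict sign patterns this is Warren's theorem, and the extension to include the zero sign --- needed because the hypotheses permit equality predicates --- follows from the Milnor--Thom bounds on the number of connected components of a real algebraic variety, applied to the arrangement of the $q$ zero-sets $\{p_{i,j} = 0\}$ in $\mathbb{R}^k$. I would invoke the resulting estimate $\Phi(q, m, k) \le (2emq/k)^k$, valid for $q \ge k$, which is the form producing the stated constants. This is the step I expect to be the main obstacle: the clean-looking bound hides the genuine algebraic-geometric content, and one must handle equalities correctly (a naive count over open sign cells would miss patterns forced onto the zero-sets) while tracking the factor $2e$ so as not to degrade the final bound. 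Note that if $q < k$ the regime is trivial, since then $d < k/s$ and the claimed bound holds directly.

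The final step is to solve the resulting transcendental inequality. Substituting $q = ds$ gives $2^d \le (2emsd/k)^k$, and taking logarithms produces an inequality of the self-referential form $d \le k\log_2(\beta d)$ with $\beta = 2ems/k$. I would unwind this using a standard logarithmic lemma --- precisely the flavour of Lemma A.2 of \citet{shalev2014understanding} already invoked in the proof of Theorem \ref{thm:2}, which converts $x \le a\ln x + b$ into an explicit bound $x \le 4a\ln(2a) + 2b$. The only care needed here is bookkeeping of the base of the logarithm and absorbing the lower-order $\log\log$ term, after which the recursion collapses to the claimed $d \le 2k\log(2ems)$.
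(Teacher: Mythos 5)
The paper does not actually prove this statement: Theorem~\ref{thm:goldberg} is imported without proof from \cite{goldberg1995bounding}, so the only meaningful comparison is with the original source. Your outline is precisely the Goldberg--Jerrum argument: freeze a shattered set of $d$ instances, turn the membership tests into $q=ds$ polynomials of degree at most $m$ in the $k$ parameters, observe that the number of realizable dichotomies is at most the number of consistent sign assignments of that family, invoke a Warren/Milnor--Thom bound, and solve the resulting inequality $2^d \le \Phi(q,m,k)$ for $d$. So the strategy is right and is the standard one.

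Two quantitative steps need repair, however. First, the constant in the sign-pattern bound: Warren's theorem bounds the number of \emph{strict} sign vectors (cells of the complement of the zero sets) by roughly $(4emq/k)^k$ for $q\ge k$; to admit the zero sign --- which you correctly note is needed because equality predicates are allowed --- the standard device is to replace each $p_{i,j}$ by the pair $p_{i,j}\pm\varepsilon$, doubling the family and yielding $(8emq/k)^k$. Your quoted $(2emq/k)^k$ is not a bound that the cited machinery delivers, and indeed the published Goldberg--Jerrum conclusion is $d\le 2k\log_2(8ems)$, not $2k\log_2(2ems)$; note that the paper itself is inconsistent here, stating $2ems$ in Theorem~\ref{thm:goldberg} but using $8ems$ in the proof of Theorem~\ref{thm:co-vc}. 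Second, the final unwinding: applying the generic lemma $x\le a\ln x + b \Rightarrow x\le 4a\ln(2a)+2b$ to $d\le k\log_2(\beta d)$ with $\beta=2ems/k$ gives $d\le 2k\log_2(2ems)+O(k\log k)$; the additive $O(k\log k)$ term does not ``collapse'' and the claimed clean bound does not follow this way. The original proof instead argues the contrapositive directly, checking that $d\ge 2k\log_2(8ems)$ (which in particular forces $d\ge 2k$) implies $(8emsd/k)^k < 2^d$ by splitting the exponent as $(8ems)^k\le 2^{d/2}$ and $(d/k)^k\le 2^{d/2}$. With those two fixes your argument is complete, but it then proves the bound with $8ems$ in place of the paper's stated $2ems$.
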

Theorem \ref{thm:goldberg} offers a convenient way to compute the VC-dimension of a class of sets. It is used in the following theorem to determine the VC-dimension of a subgraph, which in turn facilitates calculation of the co-VC-dimension of Gaussian exponentials.
\begin{theorem}
\label{thm:co-vc}
The co-VC-dimension of $d$-dimensional functions of the form $\exp \left( -\frac{1}{2} (\mathbf{x} - \boldsymbol{\mu})^{T} \mathbf{\Sigma}^{-1} (\mathbf{x} - \boldsymbol{\mu}) \right)$ is $\mathcal{O}(d)$.
\end{theorem}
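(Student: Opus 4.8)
The plan is to unwind the chain of definitions and then reduce the problem to a single application of Theorem~\ref{thm:goldberg}. By definition, the co-VC-dimension of $\mathcal{F}$ is the pseudo-dimension of its dual $\mathcal{F}'$, which is in turn the VC-dimension of the subgraph of $\mathcal{F}'$. That subgraph consists of the sets $\{ (f, y) : y \le ev_{\mathbf{x}}(f) \} = \{ (f, y) : y \le f(\mathbf{x}) \}$, one for each $\mathbf{x} \in \mathbb{R}^d$. The decisive observation is that passing to the dual swaps the roles of parameters and inputs: a set in this family is now indexed by the evaluation point $\mathbf{x}$, which carries only $d$ real coordinates, whereas an \emph{instance} is a function $f$, specified by the Gaussian parameters $(\boldsymbol{\mu}, \mathbf{\Lambda})$ with $\mathbf{\Lambda} = \mathbf{\Sigma}^{-1}$, totalling $\mathcal{O}(d^2)$ reals, together with the threshold $y$. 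Since the bound in Theorem~\ref{thm:goldberg} depends only on the number of parameters $k$ describing a set and not on the number describing an instance, this swap is exactly what turns a seemingly quadratic count into a linear one.

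First I would dispose of the exponential, which is not a polynomial predicate. Writing $g(\mathbf{x}; \boldsymbol{\mu}, \mathbf{\Lambda}) = -\frac{1}{2}(\mathbf{x} - \boldsymbol{\mu})^{T} \mathbf{\Lambda} (\mathbf{x} - \boldsymbol{\mu})$ for the exponent, I would argue that any shattered set must consist only of points with $y > 0$: a point with $y \le 0$ satisfies $y \le \exp(g)$ for every $\mathbf{x}$, so it always lies inside the subgraph set and can never be excluded, contradicting shattering. On the region $y > 0$ the map $(f, y) \mapsto (f, \ln y)$ is a bijection, and $y \le \exp(g)$ holds iff $\ln y \le g$. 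Hence the VC-dimension of the subgraph family is at most the VC-dimension of the family of sets $\{ (f, y') : y' \le g(\mathbf{x}; \boldsymbol{\mu}, \mathbf{\Lambda}) \}$ indexed by $\mathbf{x}$, with $y'$ now unconstrained. Membership in the latter is decided by the single polynomial predicate $y' + \frac{1}{2}(\mathbf{x} - \boldsymbol{\mu})^{T} \mathbf{\Lambda} (\mathbf{x} - \boldsymbol{\mu}) \le 0$; choosing the precision matrix $\mathbf{\Lambda}$ as the parameter avoids any matrix inverse, so this predicate is a genuine polynomial of degree $3$ in the combined variables.

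With this reduction the final step is direct: apply Theorem~\ref{thm:goldberg} with set-parameter count $k = d$ (the coordinates of $\mathbf{x}$), number of predicates $s = 1$, and degree $m = 3$, yielding a VC-dimension of at most $2 d \log(2 e \cdot 3) = \mathcal{O}(d)$, and therefore the co-VC-dimension of $\mathcal{F}$ is $\mathcal{O}(d)$. I expect the main obstacle to be making the reduction from the exponential subgraph to the polynomial subgraph airtight --- one must verify that discarding the $y \le 0$ points and applying the logarithmic bijection preserves shatterability in the direction needed --- and, more conceptually, keeping the parameter accounting honest so that only the $d$ coordinates of $\mathbf{x}$ are charged to $k$ while the $\mathcal{O}(d^2)$ Gaussian parameters are charged to the instance side, where Theorem~\ref{thm:goldberg} imposes no penalty. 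Getting that split right is precisely what separates the intended $\mathcal{O}(d)$ bound from the naive $\mathcal{O}(d^2)$ one.
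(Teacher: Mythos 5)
Your proof follows essentially the same route as the paper's: unwind the co-VC-dimension to the VC-dimension of the subgraph of the dual, observe that each set in that family is parameterized only by the $d$ coordinates of $\mathbf{x}$ while the $\mathcal{O}(d^2)$ Gaussian parameters are charged to the instance side, and apply Theorem~\ref{thm:goldberg} with $k = d$ and degree $m = 3$ to get the $\mathcal{O}(d)$ bound. Your write-up is in fact slightly more careful than the paper's on two minor points --- reparameterizing by the precision matrix $\mathbf{\Lambda} = \mathbf{\Sigma}^{-1}$ so the predicate is genuinely polynomial, and justifying the logarithmic change of variable on the instance side --- but the argument is the same.
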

\begin{proof}
By definition, the co-VC-dimension of the class $\mathcal{F}$ of $d$-dimensional exponentials is the pseudo-dimension of the dual $\mathcal{F}'$. In the dual $\mathcal{F}'$ containing evaluation functions, the input space is over the mean $\boldsymbol{\mu}$ and covariance $\mathbf{\Sigma}$, while $x$ becomes a fixed parameter. The pseudo-dimension of $\mathcal{F}'$ is in turn equal to the VC-dimension of its subgraph, the class of sets $\{ (\boldsymbol{\mu}, \mathbf{\Sigma}, y) \in (\mathbb{R}^{d}, \mathbb{R}^{d^2}, \mathbb{R}) ; y \leq ev_{\mathbf{x}}(\boldsymbol{\mu}, \mathbf{\Sigma}) \}$ for $ev_{\mathbf{x}} \in \mathcal{F}'$.

Denote the class of sets as $\mathcal{S}_{k, n}$. Since a set in $\mathcal{S}_{k, n}$ is fully specified by its parameter $\mathbf{x} \in \mathbb{R}^{d}$, $k = d$ in $\mathcal{S}_{k, n}$. An instance in a set is represented by $n \in \mathcal{O}(d^2)$ values, as the input $(\boldsymbol{\mu}, \mathbf{\Sigma}, y) \in \mathbb{R}^{d^2+d+1}$. Given an instance $(\boldsymbol{\mu}, \mathbf{\Sigma}, y)$ and a set $S \in \mathcal{S}_{k, n}$, the following inequality represents membership test.
\begin{align*}
\begin{split}
    y \leq ev_{\mathbf{x}}(\boldsymbol{\mu}, \mathbf{\Sigma})
    &= \exp \left( -\frac{1}{2} (\mathbf{x} - \boldsymbol{\mu})^{T} \mathbf{\Sigma}^{-1} (\mathbf{x} - \boldsymbol{\mu}) \right)\\
    \Rightarrow 2\log{y} &\leq - (\mathbf{x} - \boldsymbol{\mu})^{T} \mathbf{\Sigma}^{-1} (\mathbf{x} - \boldsymbol{\mu})
\end{split}
\end{align*}
The above test can be transformed into two predicates: one inequality and one equality. The inequality is used for the value comparison; while the right-hand side, $(\mathbf{x} - \boldsymbol{\mu})^{T} \mathbf{\Sigma}^{-1} (\mathbf{x} - \boldsymbol{\mu})$, can be represented using an equality predicate with degree $m = 3$.

A direct application of Theorem \ref{thm:goldberg} gives an upper bound for the VC-dimension of the subgraph as $2k \log{(8 ems) = 2d\log{(24e)}} \in \mathcal{O}(d)$. This is also the co-VC-dimension of $d$-dimensional Gaussian exponentials.
\end{proof}

\section{Application to Robot Localization}
To demonstrate potential of the proposed approach, we test it on a classic filtering problem -- robot localization. A robot is placed in a previously unseen environment and does not know its location or orientation. It navigates for a number of time steps, relying on an environment map, a number of onboard sensors and its movement data to gradually infer its pose. At each step, inputs to the robot are RGBD image observations, $\mathbf{o}_t$; odometry, $\mathbf{u}_t$; and the map, $\mathbf{M}$. The robot uses these data to infer its pose $(x, y, \theta)$ over time.

In this section, we consider details of the Bayes filter in the context of robot localization. As a proof-of-concept, we use a handcrafted model in a semi-realistic environment -- realistic buildings, but without furniture. For transition, we describe a simple update based on odometry inputs. For observation, we design a handcrafted model using only depth information to generate a number of Gaussians to represent $p(\mathbf{z} \mid \mathbf{x})$. In more complex and realistic settings, these functions can be learned from data instead of being handcrafted. Experimental design and results are discussed in the next section.

\subsection{Initialization}
All filtering problems start with an initial belief $bel_0(\mathbf{x})$. In the context of robot localization, $bel_0(\mathbf{x})$ is initialized as a set of Gaussians, with means over the 3D space of poses $(x, y, \theta)$ and covariances manually chosen. We consider $3$ settings for initial Gaussian centers, with increasing uncertainty. For \textit{tracking}, the robot is initially already well localized and we would like to track the robot well as it moves. Initial beliefs are distributed around the true state: the centers follow a Gaussian distribution around the true pose, with zero mean and covariance matrix $\mathbf{\Sigma} = \text{diag}[30\text{cm}, 30\text{cm}, 30^\circ]$. For \textit{semi-global localization}, the particles are uniformly initialized in the same room as the true state. We also consider a two-room case where an additional room is randomly chosen to initialize Gaussians. For \textit{global localization}, positions $(x, y)$ are randomly picked from empty map positions over the whole house. Particle orientations $\theta$ for both semi-global and global localization are uniformly sampled from the interval $[-\pi, \pi)$. 

For simplicity, covariances in all tasks are represented using diagonal matrices, with an independence assumption among $x, y, \theta$ components. For tracking, initial covariances are $\text{diag}[(4\text{cm})^2, (4\text{cm})^{2}, (0.1\text{rad})^{2}]$.  For semi-global and global localization, initial covariances are $\text{diag}[(40\text{cm})^2, (40\text{cm})^{2}, (1\text{rad})^{2}]$ and $\text{diag}[(200\text{cm})^2, (200\text{cm})^{2}, (1\text{rad})^{2}]$, respectively. Large covariances are exploited in tasks with higher uncertainty to provide more space coverage.  The covariance values have been manually checked to give good performance for the respective tasks. Gaussian weights are initialized to be uniform.

\begin{figure*}[ht]
\centering  
\includegraphics[width=0.95\textwidth]{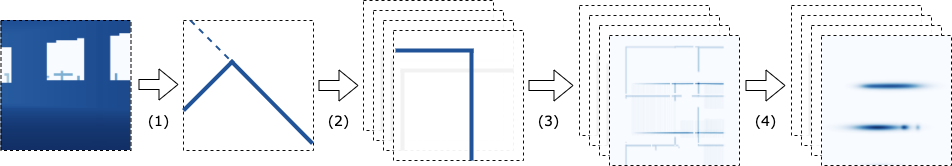}
\caption{Pipeline for extracting Gaussians using depth image. (1): simulated laser scan, followed by edge detection; (2): rotation; (3): convolution over wall map; (4): Gaussian extraction.}
\label{fig:pipeline}
\end{figure*}

\subsection{Transition Model}
\label{sec:localization_transition}
Given pose $\mathbf{x'}$ and odometry $\mathbf{u} = (\Delta x, \Delta y, \Delta \theta)$ representing robot's relative motion at the current time step, the transition function $p(\mathbf{x} \mid \mathbf{u}, \mathbf{x'})$ designates probabilities to possible next poses. While we considered representing $p(\mathbf{x} \mid \mathbf{u}, \mathbf{x'})$ as a weighted sum of Gaussians, using one Gaussian suffices in the special case of localization. If the previous pose is $\mathbf{x'}$ and the robot undergoes a displacement described by $\mathbf{u}$ with Gaussian noise $\mathcal{N}(\mathbf{x}; \mathbf{0}, {\mathbf{\Sigma}_{c}})$ considered, intuitively, the distribution
\begin{equation*}
    p(\mathbf{x} \mid \mathbf{u}, \mathbf{x'}) = \mathcal{N}(\mathbf{x}; \mathbf{x'} + \mathbf{u}, {\mathbf{\Sigma}_{c}})
\end{equation*}
Assume $bel_{t-1}(\mathbf{x}) = \sum_{i=1}^{K} w_i \hspace{0.05cm} \mathcal{N}(x; \boldsymbol{\mu}_{i}, \mathbf{\Sigma}_{i})$; then, the transition update 
\begin{align*}
    \bar{bel}_{t}(\mathbf{x}) &= \int p(\mathbf{x} \mid \mathbf{u}, \mathbf{x'}) \hspace{0.05cm} bel_{t-1}(\mathbf{x'}) d\mathbf{x'}\\
    &= \sum_{i=1}^{K} w_i \int \mathcal{N}(\mathbf{x}; \mathbf{x'} + \mathbf{u}, {\mathbf{\Sigma}_{c}}) \hspace{0.05cm} \mathcal{N}(\mathbf{x'}; \boldsymbol{\mu}_{i}, \mathbf{\Sigma}_{i}) \hspace{0.05cm} d\mathbf{x'}\\
    &= \sum_{i=1}^{K} w_i \int \mathcal{N}(\mathbf{x'}; \boldsymbol{\mu}_{i}, \mathbf{\Sigma}_{i}) \hspace{0.05cm} \mathcal{N}(\mathbf{x} - \mathbf{x'}; \mathbf{u}, {\mathbf{\Sigma}_{c}}) \hspace{0.05cm} d\mathbf{x}'\\
    &= \sum_{i=1}^{K} w_i \hspace{0.05cm} \mathcal{N}(\mathbf{x}; \boldsymbol{\mu}_{i}, \mathbf{\Sigma}_{i}) * \mathcal{N}(\mathbf{x}; \mathbf{u}, {\mathbf{\Sigma}_{c}}) \hspace{0.05cm} \\
    &= \sum_{i=1}^{K} w_i \hspace{0.05cm} \mathcal{N}(\mathbf{x}; \boldsymbol{\mu}_{i} + \mathbf{u}, \mathbf{\Sigma}_{i} + {\mathbf{\Sigma}_{c}})
\end{align*}
where at the second last equality, we apply the formula for the convolution of two Gaussian functions:
\begin{align*}
\mathcal{N}(\mathbf{z}; \hspace{0.05cm} &{\boldsymbol{\mu}_{1}}, {\mathbf{\Sigma}_{1}}) * \mathcal{N}(\mathbf{z}; {\boldsymbol{\mu}_{2}}, {\mathbf{\Sigma}_{2}}) \\ 
&=
\int \mathcal{N}(\mathbf{x}; {\boldsymbol{\mu}_{1}}, {\mathbf{\Sigma}_{1}}) \hspace{0.05cm} \mathcal{N}(\mathbf{z} - \mathbf{x}; {\boldsymbol{\mu}_{2}}, {\mathbf{\Sigma}_{2}}) d{\mathbf{x}} \\
&= \mathcal{N}(\mathbf{z}; {\boldsymbol{\mu}_{1}} + {\boldsymbol{\mu}_{2}}, {\mathbf{\Sigma}_{1}} + {\mathbf{\Sigma}_{2}}) 
\end{align*}

\subsection{Observation Model}
Given observation $\mathbf{z}$, the function $p(\mathbf{z} \mid \mathbf{x})$ over $\mathbf{x}$, is approximated similarly as a weighted sum of Gaussians: $p(\mathbf{z} \mid \mathbf{x}) = \sum_{i=1}^{K} w_{i} \hspace{0.05cm} \mathcal{N}(\mathbf{x}; \boldsymbol{\mu}_{i}, \mathbf{\Sigma}_{i})$, $\boldsymbol{\mu} \in \mathbb{R}^{3}, \mathbf{\Sigma} \in \mathbb{R}^{3 \times 3}$, where $K$ represents the number of Gaussians identified from observations. 

How do we obtain the set of Gaussians at each step? We propose a pipeline for this purpose, illustrated in Figure \ref{fig:pipeline}. Given a depth image as observation $\mathbf{z}$, our method transforms it into a filter representing shape of the wall facing the robot. Next, the filter is used to perform convolution over the wall map, generating probability distributions over map locations. Gaussians are then extracted from these distributions. We follow the Manhattan world aussumption \citep{coughlan2001manhattan} to simplify inference, and assume that wall intersection angles are $90^{\circ}$. Our data set conforms to this assumption.

\subsubsection{Filter Generation}  The first step is to generate a filter representing shape of the wall, typically corners, currently facing the robot. Given a depth image containing distances from the camera, we perform a simulated laser scan horizontally across the depth image. The $80$-percentile value of each column is taken and multiplied with a pre-defined constant to represent the distance.
% We deliberately choose a high percentile value to approximately ignore objects in front of the wall.
The simulated LIDAR has a resolution of 56 beams and a $60^\circ$ field of view. The resulting filter typically has one or two main edges. RANSAC regression is then applied to identify an edge from the filter.

\subsubsection{Rotation and Convolution} Under the Manhattan world assumption, there are $4$ directions along which to rotate the filter so that the end result aligns with the map. Afterwards, we apply convolution on the wall map, using the rotated filters as the convolution kernel. This gives $4$ distributions over the map; values on these distributions represent how well map locations match with the filter.

\subsubsection{Gaussian Extraction}
Finally, we perform thresholding and segmentation on belief images to extract the most important regions. We take $50\%$ of the maximum match value from each distribution for thresholding. Outcomes are typically sets of line segments. For a line segment, a local maximum is taken as the Gaussian center while the maximum on this segment is taken as the weight. For covariance along $(x, y)$ directions, we take the maximum distance from the center as the standard deviation, and manually increment it with a handpicked value ($40\text{cm}$) to increase coverage. For the $\theta$ direction, we pick 
$\pi \text{rad}$ as the standard deviation.

\section{Simulation Experiments}
We conduct localization experiments on the House3D simulator \citep{wu2018building}, built on top of a collection of residential buildings from the SUNCG data set \citep{song2017semantic}. We consider a simplified environment without furniture. The average building and room sizes are $206 \text{m}^2$ and $37 \text{m}^2$, respectively. We use TensorFlow for our implementation\footnote{Our implementation is available at \url{https://github.com/suxuann/mgpf}.}.

\subsection{Sampling in Practice}
While using sampling to approximate multiplication gives an unbiased estimate, in practice it has high variance especially if we sample from $K^2$ terms when $K$ is a large number, giving low performance in our experiments. An alternative method to reduce the number of terms in a product is to take the largest coefficients. Given a weighted sum of functions containing $K^2$ terms  $\sum_{i=1}^{K^2} w_i f_{i}(\mathbf{x})$, the terms are sorted in decreasing order of $w_i$ to become $\sum_{j=1}^{K^2} \tilde{w}_j \tilde{f}_{j}(\mathbf{x})$ with $\tilde{w}_1 \geq \tilde{w}_2 \geq \dots \geq \tilde{w}_{K^2}$. The largest $K$ weights $\tilde{w}_j, 1 \leq j \leq K$ are taken to form a new weighted sum $\sum_{j=1}^{K} \tilde{w}_j \tilde{f}_{j}(\mathbf{x})$. We call this alternative sampling method \emph{top-$K$ sampling}. While top-$K$ sampling gives a biased estimate, it is found to be more effective in our localization experiments.

\subsection{Baseline}
We experimentally compare our inference method -- multiplicative Gaussian particle filter (MGPF) -- with the particle filter (PF). For a fair comparison, we constrain them to use the same transition and observation models as described above. The only difference lies in how inference is performed. At each step, MGPF multiplies the set of current Gaussian beliefs, with another set representing observation $P(\mathbf{z}\mid\mathbf{x})$; while PF reads off weights of particles directly from $P(\mathbf{z}\mid\mathbf{x})$, followed by an optional resampling operation. For all our experiments, the resampling step is activated.

We hypothesize that inclusion of covariance in MGPF increases the capacity of particle coverage. Indeed, experimental results show that while the two filtering methods have comparable performance under concentrated initial beliefs (tracking), MGPF consistently outperforms PF in alternative cases with higher uncertainty (semi-global and global localization).

\subsection{Evaluation} We randomly generate a localization data set in the House3D simulator for evaluation. At each time step, the robot takes one of two actions: moves forward ($p = 0.8$), or makes a turn ($p = 0.2$). The moving distance and turning angle are uniformly taken from $[20\text{cm}, 80\text{cm}]$ and $[15^\circ, 60^\circ]$, respectively. Each trajectory contains $100$ time steps. The final evaluation set consists of $820$ trajectories in $47$ different buildings. Our evaluation settings largely follow those of \cite{karkus2018particle}.

\begin{figure}[ht]
\centering  
\includegraphics[width=0.40\textwidth]{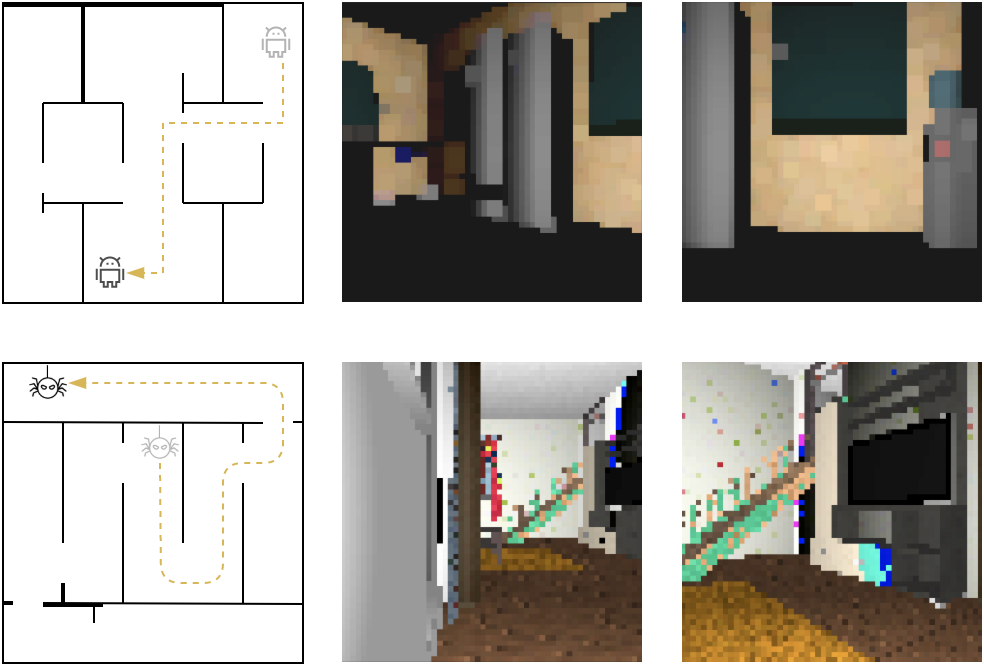}
\caption{Example House3D localization data.}
\label{fig:data}
\end{figure}

For tracking, we take the first $24$ steps of each trajectory. We report the mean average error (MAE) and the root mean square error (RMSE) calculated for robot positions along each trajectory. For semi-global and global localization, we consider full trajectories. We use another evaluation metric -- success rate: localization is considered successful if the estimation error falls below $1\text{m}$ for the last $25$ steps of a $100$-step trajectory. We vary the numbers of particles $K$ for each task. As uncertainty increases, more particles are needed for sufficient space coverage.

\subsection{Results and Discussion}
Tables \ref{tab:tracking} and \ref{tab:localization} report results of tracking, semi-global and global localization. It is clear experimentally that expanding discrete states (i.e. the Gaussian centers) with covariances as done in MGPF enlarges particle coverage. At each step, the inference method in MGPF redistributes and reweighs current Gaussians according to true Gaussians identified from observations, and successfully reduces state uncertainty: it reaches a success rate of $71.10\%$ for one-room localization, and $56.10\%$ for localization over the whole house. These experimental results present an established proof-of-concept of our proposed inference method.

\begin{table}[h!]
\centering
\begin{tabular}{||c c c c||} 
\hline
 & $K$ & MAE & RMSE\\ [0.5ex] 
\hline\hline
PF & 50 & 46.81 & 69.96\\
PF & 100 & 35.43 & 49.26\\ 
PF & 300 & \textbf{25.02} & \textbf{35.31}\\
MGPF & 50 & 29.02 & 43.94\\
MGPF & 100 & 27.29 & 42.28\\
MGPF & 300 & 25.33 & 40.01\\
\hline
\end{tabular}
\caption{Tracking results in RMSE (cm) and MAE (cm), using $K$ particles or Gaussian functions.}
\label{tab:tracking}
\end{table}
\vspace{-0.1cm}
\begin{table}[h!]
\centering
\begin{tabular}{||c c c c c||} 
\hline
 & $K$ & $N = 1$ & $N = 2$ & $N =$ all\\ [0.5ex] 
\hline\hline
PF & 100 & 4.27 & 2.56 & 0.98\\
PF & 300 & 8.78 & 6.34 & 2.44\\
PF & 600 & 14.27 & 7.68 & 2.80\\
MGPF & 100 & 58.82 & 44.51 & 44.39\\
MGPF & 300 & 69.27 & 63.78 & 54.02\\
MGPF & 600 & \textbf{71.10} & \textbf{68.17} & \textbf{56.10}\\
\hline
\end{tabular}
\caption{Localization over $N$ rooms. Results in success rate  (\%), using $K$ particles or Gaussian functions.}
\label{tab:localization}
\end{table}

In contrast, while the conventional PF is able to maintain the localized states in tracking, it fails to decrease state uncertainties in localization tasks, reaching a success rate of barely $14.27\%$ for localization. In addition, we note that as expected, increasing the number of particles and Gaussian functions helps for both models. Using more particles or functions to cover the state space leads to gains in success rate.

\section{Conclusions and Future Work}
In this paper, we propose a new parameterization of the Bayes filter, based on decomposing functions into weighted sums of continuous functions. Theoretically, we analyze our approximation to give its convergence rate, showing its relationship with the co-VC-dimension of the given function class. Next, we study the class of Gaussian functions, and show that it is suitable as the component function in weighted sums. We evaluate our method using robot localization as a proof-of-concept and experimentally demonstrate that the method outperforms particle filters.

The technique of using functions in place of discrete particles is general, and can be applied to additional domains such as robotic mapping or econometrics. Furthermore, learning can be used to construct models when domain knowledge is insufficient.

\subsubsection*{Acknowledgements}
We thank the anonymous reviewers for their helpful comments. This work is supported by NUS AcRF Tier 1 grant R-252-000-639-114.

\bibliography{aistats2020.bib}

\begin{thebibliography}{16}
\providecommand{\natexlab}[1]{#1}
\providecommand{\url}[1]{\texttt{#1}}
\expandafter\ifx\csname urlstyle\endcsname\relax
  \providecommand{\doi}[1]{doi: #1}\else
  \providecommand{\doi}{doi: \begingroup \urlstyle{rm}\Url}\fi

\bibitem[Alspach and Sorenson(1972)]{alspach1972nonlinear}
D.~Alspach and H.~Sorenson.
\newblock Nonlinear bayesian estimation using gaussian sum approximations.
\newblock \emph{IEEE transactions on automatic control}, 17\penalty0
  (4):\penalty0 439--448, 1972.

\bibitem[Coughlan and Yuille(2001)]{coughlan2001manhattan}
J.~M. Coughlan and A.~L. Yuille.
\newblock The manhattan world assumption: Regularities in scene statistics
  which enable bayesian inference.
\newblock In \emph{Advances in Neural Information Processing Systems}, pages
  845--851, 2001.

\bibitem[Faubel et~al.(2009)Faubel, McDonough, and Klakow]{faubel2009split}
F.~Faubel, J.~McDonough, and D.~Klakow.
\newblock The split and merge unscented gaussian mixture filter.
\newblock \emph{IEEE Signal Processing Letters}, 16\penalty0 (9):\penalty0
  786--789, 2009.

\bibitem[Goldberg and Jerrum(1995)]{goldberg1995bounding}
P.~W. Goldberg and M.~R. Jerrum.
\newblock Bounding the vapnik-chervonenkis dimension of concept classes
  parameterized by real numbers.
\newblock \emph{Machine Learning}, 18\penalty0 (2-3):\penalty0 131--148, 1995.

\bibitem[Gordon et~al.(1993)Gordon, Salmond, and Smith]{gordon1993novel}
N.~J. Gordon, D.~J. Salmond, and A.~F. Smith.
\newblock Novel approach to nonlinear/non-gaussian bayesian state estimation.
\newblock In \emph{IEE proceedings F (radar and signal processing)}, volume
  140, pages 107--113. IET, 1993.

\bibitem[Julier and Uhlmann(1997)]{julier1997new}
S.~J. Julier and J.~K. Uhlmann.
\newblock New extension of the kalman filter to nonlinear systems.
\newblock In \emph{Signal processing, sensor fusion, and target recognition
  VI}, volume 3068, pages 182--193. International Society for Optics and
  Photonics, 1997.

\bibitem[Kalman(1960)]{kalman1960new}
R.~E. Kalman.
\newblock A new approach to linear filtering and prediction problems.
\newblock \emph{Journal of basic Engineering}, 82\penalty0 (1):\penalty0
  35--45, 1960.

\bibitem[Karkus et~al.(2018)Karkus, Hsu, and Lee]{karkus2018particle}
P.~Karkus, D.~Hsu, and W.~S. Lee.
\newblock Particle filter networks with application to visual localization.
\newblock \emph{arXiv preprint arXiv:1805.08975}, 2018.

\bibitem[Petersen et~al.()]{petersen2008matrix}
K.~B. Petersen et~al.
\newblock The matrix cookbook.

\bibitem[Psiaki et~al.(2015)Psiaki, Schoenberg, and Miller]{psiaki2015gaussian}
M.~L. Psiaki, J.~R. Schoenberg, and I.~T. Miller.
\newblock Gaussian sum reapproximation for use in a nonlinear filter.
\newblock \emph{Journal of Guidance, Control, and Dynamics}, 38\penalty0
  (2):\penalty0 292--303, 2015.

\bibitem[Raihan and Chakravorty(2016)]{raihan2016particle}
D.~Raihan and S.~Chakravorty.
\newblock Particle gaussian mixture (pgm) filters.
\newblock In \emph{2016 19th International Conference on Information Fusion
  (FUSION)}, pages 1369--1376. IEEE, 2016.

\bibitem[Shalev-Shwartz and Ben-David(2014)]{shalev2014understanding}
S.~Shalev-Shwartz and S.~Ben-David.
\newblock \emph{Understanding machine learning: From theory to algorithms}.
\newblock Cambridge university press, 2014.

\bibitem[Song et~al.(2017)Song, Yu, Zeng, Chang, Savva, and
  Funkhouser]{song2017semantic}
S.~Song, F.~Yu, A.~Zeng, A.~X. Chang, M.~Savva, and T.~Funkhouser.
\newblock Semantic scene completion from a single depth image.
\newblock In \emph{Proceedings of the IEEE Conference on Computer Vision and
  Pattern Recognition}, pages 1746--1754, 2017.

\bibitem[Thrun et~al.(2005)Thrun, Burgard, and Fox]{thrun2005probabilistic}
S.~Thrun, W.~Burgard, and D.~Fox.
\newblock \emph{Probabilistic robotics}.
\newblock MIT press, 2005.

\bibitem[Wrigley et~al.(2017)Wrigley, Lee, and Ye]{wrigley2017tensor}
A.~Wrigley, W.~S. Lee, and N.~Ye.
\newblock Tensor belief propagation.
\newblock In \emph{Proceedings of the 34th International Conference on Machine
  Learning-Volume 70}, pages 3771--3779. JMLR. org, 2017.

\bibitem[Wu et~al.(2018)Wu, Wu, Gkioxari, and Tian]{wu2018building}
Y.~Wu, Y.~Wu, G.~Gkioxari, and Y.~Tian.
\newblock Building generalizable agents with a realistic and rich 3d
  environment.
\newblock \emph{arXiv preprint arXiv:1801.02209}, 2018.

\end{thebibliography}

\newpage
\clearpage

\end{document}